\newtheorem{proposition}{Proposition}
\theoremstyle{definition}
\newtheorem{definition}{Definition}
\newtheorem*{problem*}{Problem}
\newcommand{\pa}{\partial}
\newcommand{\ba}{\begin{align}}
\newcommand{\ea}{\end{align}}
\newcommand{\fr}{\frac}
\newcommand{\R}{{\mathbb R}}
\newcommand{\E}{{\mathbb E}}
\newcommand{\N}{{\mathcal N}}
\DeclareMathOperator*{\tr}{tr}
\DeclareMathOperator*{\diag}{diag}
\DeclareMathOperator*{\vech}{vech}
\DeclareMathOperator*{\argmin}{argmin}
\newcommand{\sinc}{\textrm{sinc}}
\newcommand{\eps}{\varepsilon}
\newcommand{\calA}{{\cal A}}
\newcommand{\calB}{{\cal B}}
\newcommand{\calD}{{\cal D}}
\newcommand{\calF}{{\cal F}}
\newcommand{\calI}{{\cal I}}
\newcommand{\calQ}{{\cal Q}}
\newcommand{\bfb}{\mathbf{b}}
\newcommand{\bfd}{\mathbf{d}}
\newcommand{\bfe}{\mathbf{e}}
\newcommand{\bff}{\mathbf{f}}
\newcommand{\bfg}{\mathbf{g}}
\newcommand{\bfh}{\mathbf{h}}
\newcommand{\bfm}{\mathbf{m}}
\newcommand{\bfp}{\mathbf{p}}
\newcommand{\bfq}{\mathbf{q}}
\newcommand{\bfr}{\mathbf{r}}
\newcommand{\bfs}{\mathbf{s}}
\newcommand{\bfu}{\mathbf{u}}
\newcommand{\bfv}{\mathbf{v}}
\newcommand{\bfw}{\mathbf{w}}
\newcommand{\bfx}{\mathbf{x}}
\newcommand{\bfy}{\mathbf{y}}
\newcommand{\bfz}{\mathbf{z}}
\newcommand{\bfalpha}{\boldsymbol{\alpha}}
\newcommand{\bfeta}{\boldsymbol{\eta}}
\newcommand{\bfpi}{\boldsymbol{\pi}}
\newcommand{\bfsigma}{\boldsymbol{\sigma}}
\newcommand{\bfxi}{\boldsymbol{\xi}}
\newcommand{\bfU}{\mathbf{U}}
\newcommand{\bbH}{\mathbb{H}}
\newcommand{\bbN}{\mathbb{N}}
\newcommand{\bbR}{\mathbb{R}}
\newcommand{\bbS}{\mathbb{S}}
\title{\LARGE \bf Active SLAM over Continuous Trajectory and Control: \\ A Covariance-Feedback Approach}
\author{Shumon Koga \and Arash Asgharivaskasi \and Nikolay Atanasov
\thanks{We gratefully acknowledge support from ARL DCIST CRA W911NF-17-2-0181 and NSF FRR CAREER 2045945.}%
\thanks{The authors are with the Department of Electrical and Computer Engineering, UC San Diego, 9500 Gilman Drive, La Jolla, CA, 92093-0411, {\tt\small \{skoga,aasghari,natanasov\}@ucsd.edu}.}
}
\begin{document}

\maketitle
\thispagestyle{empty}
\pagestyle{empty}


\begin{abstract} 

 This paper proposes a novel active Simultaneous Localization and Mapping (SLAM) method with continuous trajectory optimization over a stochastic robot dynamics model. The problem is formalized as a stochastic optimal control over the continuous robot kinematic model 
 to minimize a cost function that involves the covariance matrix of the landmark states. 
 We tackle the problem by separately obtaining an open-loop control sequence subject to deterministic dynamics by iterative Covariance Regulation (iCR) and a closed-loop feedback control under stochastic robot and covariance dynamics by Linear Quadratic Regulator (LQR). 
 The proposed optimization method 
 captures the coupling between localization and mapping in predicting uncertainty evolution and synthesizes highly informative sensing trajectories. 
We demonstrate its performance in active landmark-based SLAM using relative-position measurements with a limited field of view.
\end{abstract}


\section{Introduction}

Simultaneous Localization and Mapping (SLAM) has been instrumental for enabling autonomous robots to transition from controlled, structured, and fully known environments to operation in a priori unknown real-world conditions \cite{cadena2016past,rosen2021advances}. Many current SLAM techniques, however, remain \emph{passive} in their utilization of sensor data. Active SLAM \cite{delmerico2017active} is an extension of the SLAM problem which couples perception and control, aiming to acquire more information about the environment and reduce the uncertainty in the localization and mapping process. 
Active SLAM introduces unique challenges related to keeping the map and location estimation processes accurate, and yet computing and propagating uncertainty over many potential sensing trajectories efficiently to select an informative one. 

Many existing works in active perception decouple the localization and mapping problems and assume known robot states when planning active mapping trajectories. The literature on active mapping can be categorized according to the map representation it employs. Some techniques use volumetric mapping, which represents occupancy (e.g., occupancy grid) or obstacle distance (e.g., signed distance field) at a finite number of voxels obtained by discretizing the environment. Other techniques employ landmark-based mapping, which represents positions of a finite number of landmarks (e.g., objects or visual features) in the environment. While a volumetric representation captures the complete geometric structure of the environment, landmark-based mapping requires much less memory. One of the earliest approaches for active mapping \cite{yamauchi1998frontier} is based on detecting and planning a shortest path to frontiers (boundaries between explored and unexplored space) in a volumetric map.
Information-theoretic planning is an alternative approach, which utilizes an information measure to quantify and minimize the uncertainty in the map, as developed first in \cite{elfes1995robot} and subsequently used widely in robotics \cite{moorehead2001autonomous, grocholsky2002information, hollinger2014sampling}.  
Efficient computation methods for computing uncertainty of volumetric maps have been proposed in \cite{charrow2015information} for Cauchy-Schwarz quadratic mutual information (CSQMI), and in \cite{zhang2020fsmi} for fast Shannon mutual information (FSMI). Active mapping for truncated signed distance field (TSDF) reconstruction has been considered in \cite{saulnier2020information} and multi-category semantic maps have been studied in \cite{asgharivaskasi2021active}. Existing methods
are, however, limited to discrete control spaces, typically with a finite number of possible control inputs, such as \cite{saulnier2020information, charrow2015information, zhang2020fsmi}. 
Recently, in \cite{koga2021active}, we have developed a continuous trajectory optimization method for active mapping, named iterative Covariance Regulation (iCR). We have introduced a differentiable field of view in sensing model, and apply gradient descent method for obtaining an open-loop control sequence to maximize the differential entropy of the map. 


In applications other than mapping, motion planning under uncertainty in the robot states has been developed for several robotics tasks, such as reaching a goal without collisions with obstacles.  
To cope with the uncertainty in the motion model, the probability density function of the robot state given a sensory data is constructed, named as \emph{belief space}. An important work in this area is Belief Roadmaps \cite{prentice2009belief}, which projects a roadmap from the state space to the belief space and seeks an optimal path in the constructed graph.
Such a sampling-based method has been further developed in \cite{agha2011firm,agha2018slap}. 
Alternatively, continuous-space optimization methods have also been proposed for belief space planning. Representative work by \cite{van2012motion} applies an iterative LQG \cite{todorov2005generalized}, which computes both a nominal open-loop trajectory and a feedback control policy through iterative solutions of a dynamic programming. The authors have utilized iLQG for the belief dynamics consisting of the mean and the covariance of the robot state through EKF estimate utilizing a sensory data. However, the method in \cite{van2012motion} limits the observation model to be smooth for enabling the gradient computation in iLQG, while a typical measurement by a camera field of view does not follow such a smooth observation model. Recently, \cite{rahman2021uncertainty} has relaxed the assumption by introducing a probabilistic visibility model in sensing, and proposed a novel motion planning method with guaranteeing a constraint on the uncertainty in the robot state to be satisfied, via employing an augmented Lagrangian method. This paper differs from the belief space planning proposed in \cite{van2012motion,rahman2021uncertainty} in the sense that we take into account a target dynamics as a belief state and propose a method of solving an open-loop trajectory and closed-loop policy \emph{separately}, which is computationally efficient since the iteration is needed only for an open-loop trajectory under a deterministic dynamics.


Active SLAM is a challenging problem due to the mutual dependence among the accuracy of the robot localization, the performance of the mapping, and the trade-off between exploring new areas and exploiting uncertainty minimization in visited areas. Several techniques in the literature approach the active SLAM problem with a landmark-based mapping using greedy planning over a discrete control space, see \cite{carlone2014active} for instance. To avoid the costly global planning with long time horizons, in \cite{leung2006active}, the authors have introduced an attractor in landmark-based active SLAM, which incorporates global information about the environment for the local planner, and applied a model predictive control approach. Utilizing the idea of introducing the attractor, in \cite{atanasov2015decentralized}, a decentralized nonmyopic approach to a multi-robot landmark-based active SLAM has been proposed via exploiting the sparsity in the information filter. In \cite{Kantaros_InformationGathering_RSS19}, a multi-robot landmark-based active SLAM has been tackled by develoing a scalable sampling-based planning. 
While almost all of the literature in active SLAM have employed a discretized control space planning, \cite{martinez2009bayesian} has developed an online path planning method for active SLAM under a continuous control space by a Beyesian optimization for updating a parameter in a control policy. However, the method requires a sampling of the state propagation to approximate a mean square error of SLAM and an iteration of policy search, which renders a difficulty in the real-time implementation with an online planning. 

Unlike existing active SLAM techniques which commonly consider a discretized control space and known robot poses, 
this work develops an active SLAM method with continuous trajectory optimization over a stochastic robot dynamics model utilizing offline planning. A major advantage of the proposed method is its ability to capture the coupling between localization and mapping in predicting uncertainty evolution and to synthesize highly informative sensing trajectories due to the continuous-space optimization, which is computationally efficient and capable of real-time implementation through an offline computation.
We first provide a general formulation of the active information acquisition as studied in \cite{Atanasov14ICRA, saulnier2020information, koga2021active}. Apart from the previous literature, we include a stochastic process noise in the robot dynamics, which needs to be dealt with in application to active SLAM. Next, we propose a method for obtaining a nominal open-loop trajectory via iCR for deterministic robot dynamics and a closed-loop control policy by LQR for a linearized stochastic system around the iCR trajectory. Then, we apply these techniques to active SLAM to estimate the positions of a finite number of landmarks. 




\section{Problem Statement} \label{sec:problem} 


We consider a sensing system with state $\bfx_k \in \R^{n_x}$ and control input $\bfu_k \in \R^m$ at time $t_k \in \R_+$ where $\{t_k\}_{k=0}^K$ for some $K \in \bbN$ is an increasing sequence. The task is to track and detect a target state, denoted as $\bfy_k \in \R^{n_y}$, through on-board sensors which receives a sensor measurement $\bfz_k \in \R^{n_z}$ as a function of both the sensing system and target states.  We consider a stochastic nonlinear dynamics of the system state, a linear stochastic dynamics of the target state, and a linear observation model with respect to the target state, described by
\begin{equation}\begin{aligned}
    \bfx_{k+1} &= \bff(\bfx_k, \bfu_k, \bfw_k),   \\
    \bfy_{k+1} &= A \bfy_k + \Xi_k^{1/2} \bfxi_k , \\
    \bfz_k &= H(\bfx_k) \bfy_k + V(\bfx_{k})^{1/2} \bfv_k,  \label{eq:AIA1}
\end{aligned}\end{equation}
where $\bfw_k \sim \N(0, W_k)$, $\bfxi_k \sim \N(0, I_{n_y})$, $\bfv_k \sim \N(0, I_{n_z})$, $W_k \in \bbS_{\succ 0}^{n_x \times n_x}$, $\Xi_k \in \bbS_{\succ 0}^{n_y \times n_y}$, $V: \R^{n_x} \to \bbS_{\succ 0}^{n_z \times n_z}$, and $H: \R^{n_x} \to \R^{n_z \times n_y}$. $\bbS_{\succ 0}^{n \times n}$ is a set of positive definite matrices within $\R^{n \times n}$. 

Active information acquisition is a motion planning problem for a system dynamics aiming to minimize some uncertainty measure of the target state. One representative candidate for the measure is the differential entropy in the target state conditioned on the sensor states and measurements: 
\begin{align} \label{eq:cost-entropy} 
    \min_{\bfu_0, \dots, \bfu_{K-1}} \bbH (\bfy_K | \bfz_{1:K}, \bfx_{1:K} ). 
\end{align}
In \cite{le2009trajectory}, for deterministic system dynamics (i.e., $\bfw_k = 0$),  
the problem of minimizing \eqref{eq:cost-entropy} is shown to be equivalent to minimizing the log determinant of the covariance matrix of the target state as a deterministic optimal control, which is known as $D$-optimality in optimal experimental design \cite{optimalExpDesign}. 
There are several other criteria in optimal experimental design, such as $A$-optimality, minimizing the trace of the covariance, and $E$-optimality, minimizing the maximum eigenvalue \cite{carrillo2012comparison}. Here, we consider a general cost function over the covariance matrix to capture all possible optimality criteria. 

We approach the problem of minimizing an uncertainty criterion over the target state subject to motion and sensor models in \eqref{eq:AIA1} by extending the formulation in \cite{Atanasov14ICRA} to include stochastic noise in the robot dynamics. Let $\Sigma_k \in \bbS_{\succ 0}^{n_y \times n_y}$ be the covariance matrix of the target state. Then, the optimization problem to minimize \eqref{eq:cost-entropy} subject to \eqref{eq:AIA1} is rewritten as:
\begin{align}
\min_{\bfU \in \R^{mK}} \E& \left\{ \sum_{k=0}^{K-1} \bar c_k \left(\Sigma_k\right) + \bar c_K \left( \Sigma_{K} \right) \right\}, \label{eq:cost} \\
\textrm{s.t.} \quad  
    \bfx_{k+1} &= \bff(\bfx_k, \bfu_k, \bfw_k), \notag \\
    \Sigma_{k+1} &= A \left( \Sigma_k^{-1} + M (\bfx_{k+1}) \right)^{-1} A^\top + \Xi_k , \label{eq:Riccati} \\
    M(\bfx) &=  H(\bfx)^\top V(\bfx)^{-1} H(\bfx), \label{eq:M-def} 
\end{align}
where $\bar c_k: \bbS_{\succ 0}^{n_y \times n_y} \to \R$ for $k \in \{0, \dots, K\}$ is an objective function with respect to the covariance matrix, $M: \R^{n_x}  \to \R^{n_y \times n_y}$ is the so-called ``sensor information matrix"
. In $D$-optimal design to minimize \eqref{eq:cost-entropy}, we can set $\bar c_k = 0$ for $k \in \{0, \dots, K-1\}$ and $\bar c_K(\Sigma_K) = \log \det (\Sigma_K)$. In this paper, we keep setting a general objective function $\bar c_k$. 

We tackle the optimal control problem by a separate planning of an open-loop trajectory optimization (iCR) developed in \cite{koga2021active}, and a closed-loop feedback control by LQR. While iCR in \cite{koga2021active} has been developed for a robot motion model with an $SE(3)$ pose state represented by a matrix, 
a typical LQR approach is applicable to dynamics described with respect to a vector state. Therefore, we reformulate the Riccati update \eqref{eq:Riccati} of the covariance matrix by the dynamics of the vector state. Let $\bfsigma_k \in \R^{n_{\sigma}}$, where $n_{\sigma} := n_y (n_y+1) / 2$, be a vector representation of the covariance matrix $\Sigma_k $ defined by 
%
 $$ \bfsigma_k := \vech (\Sigma_k),$$ 
%
where $\vech (\cdot) : \bbS_{\succ 0}^{n_y \times n_y} \to \R^{n_y (n_y+1) / 2}$ is a half-vectorization operator applied to a symmetric matrix. 
By reformulating the Riccati update in \eqref{eq:Riccati} and
rewriting the cost function \eqref{eq:cost} with respect to the vector state $\bfsigma$, the optimization problem we consider can be recast as follows. 

\begin{problem*} 
Obtain a control policy $\bfu_k = \bfpi_k(\bfx_k, \bfsigma_k) $, where $\bfpi_k: \R^{n_x + n_{\sigma}} \to \R^m$ for $k = 0, \dots, K-1$ to solve the following stochastic optimal control problem:
\begin{align} 
    \min_{\bfpi_0, \dots, \bfpi_{K-1}} \E \left\{ \sum_{k=0}^{K-1} c_k(\bfsigma_k) + c_K (\bfsigma_K) \right\} \label{eq:cost-sigma}
\end{align}
where $c_k: \R^{n_{\sigma}} \to \R$, subject to 
\begin{align} \label{eq:AIA-motion}
    \bfx_{k+1} &= \bff(\bfx_k, \bfpi_k(\bfx_k, \bfsigma_k), \bfw_k), \\
    \bfsigma_{k+1} &= \bfg (\bfsigma_k, \bfx_{k+1}). \label{eq:AIA-sigma}
\end{align}
\end{problem*}

\section{Planning Method} \label{sec:planning} 

This section proposes a method to solve \eqref{eq:cost-sigma}--\eqref{eq:AIA-sigma} by designing an open-loop control sequence and closed-loop control policy separately. 

\subsection{Linearized system around iCR trajectory} 

iCR proposed by \cite{koga2021active} provides the solution to the deterministic case (i.e., $\bfw_k = 0$) of the optimal control stated above, for active exploration and mapping with the cost of log determinant of the covariance matrix. Extending iCR to minimizing a general cost, we obtain a nominal open-loop trajectory $\{\bar \bfx_{k+1}, \bar \bfu_{k}, \bar \bfsigma_{k+1} \}_{k = 0}^{K-1}$ which is a solution to 
\begin{align*}
    [\bar \bfu_1, \dots, \bar \bfu_{K-1}] &= \argmin_{\bar \bfu_1, \dots, \bar \bfu_{K-1}} \left( \sum_{k=0}^{K-1} c_k(\bar \bfsigma_k) + c_K (\bar \bfsigma_K) \right), \\
\textrm{s.t.} \quad    \bar \bfx_{k+1} &= \bff(\bar \bfx_k, \bar \bfu_k,0  ), \\
    \bar \bfsigma_{k+1} &= \bfg (\bar \bfsigma_k, \bar \bfx_{k+1}).  
\end{align*}
The solution is obtained via gradient-descent for a multi-step control sequence $\bfU = [\bar \bfu_1^\top, \dots, \bar \bfu_{K-1}^\top]^\top$, via iterative update of the control sequence by $\bfU \leftarrow \bfU - \bfalpha \fr{\pa J^{\bfU}}{\pa \bfU}$ with a step size $\bfalpha $ and the cost $J^{\bfU} = \sum_{k=0}^{K-1} c_k(\bfsigma_k) + c_K (\bfsigma_K)$, where the gradient $\fr{\pa J^{\bfU}}{\pa \bfU}$ is computed analytically. 
Around the nominal open-loop trajectory $\{\bar \bfx_{k+1}, \bar \bfu_{k}, \bar \bfsigma_{k+1} \}_{k = 0}^{K-1}$ and the mean of the noise $\bfw_k = 0$, the nonlinear stochastic dynamics \eqref{eq:AIA-motion}, \eqref{eq:AIA-sigma} can be described by a linear time-varying system as a first-order approximation through Taylor expansion (see \cite{zheng2021belief}). 
Let us define the error variables:
\begin{align*}
    \tilde \bfx_{k} &= \bfx_{k} - \bar \bfx_{k}, \quad  \tilde \bfsigma_k = \bfsigma_k - \bar \bfsigma_k, \\
    \tilde \bfpi_k(\tilde \bfx_k, \tilde \bfsigma_k) &= \bfpi_k(\bfx_k, \bfsigma_k) - \bar \bfu_{k} . 
\end{align*}
Then, linearizing the dynamics \eqref{eq:AIA-motion}, \eqref{eq:AIA-sigma} around the nominal trajectory $\{\bar \bfx_{k+1}, \bar \bfsigma_{k+1} \}_{k = 0}^{K-1}$, the dynamics for the error variables is given as 
:
\begin{align*}
       \tilde \bfx_{k+1} &= E_k \tilde \bfx_k + B_k \tilde \bfpi_k(\tilde \bfx_k, \tilde \bfsigma_k) + D_k \bfw_k, \\
    \tilde \bfsigma_{k+1} 
    &= F_k \tilde \bfsigma_k + G_k (E_k \tilde \bfx_k + B_k \tilde \bfu_k + D_k \bfw_k), 
\end{align*}
where 
\begin{align}
  \small  E_k &= \fr{\pa \bff}{\pa \bfx} \bigg |_{(\bar \bfx_k, \bar \bfu_k, 0)}, B_k = \fr{\pa \bff}{\pa \bfu}\bigg |_{(\bar \bfx_k, \bar \bfu_k, 0)},
  D_k = \fr{\pa \bff}{\pa \bfw}\bigg |_{(\bar \bfx_k, \bar \bfu_k, 0)}, \label{eq:Jacob-robot_2} \\
    F_k &= \fr{\pa \bfg}{\pa \bfsigma} \bigg |_{(\bar \bfx_k, \bar \bfu_k, 0)},\quad G_k =  \fr{\pa \bfg}{\pa \bfx} \bigg |_{(\bar \bfx_k, \bar \bfu_k, 0)}. \label{eq:Jacob-Riccati} 
\end{align}
We aim to minimize the cost \eqref{eq:cost-sigma} subject to the linearized stochastic dynamics by designing a control policy $\tilde \bfpi_k(\bfx, \bfsigma)$. Applying the Taylor expansion to \eqref{eq:cost-sigma} around the nominal trajectory and approximating by second-order accuracy yields
\begin{align}
    c_k(\bfsigma_k) & \approx  \tilde \bfsigma_k^\top C_k \tilde \bfsigma_k + \bar \bfb_k^\top  \tilde \bfsigma_k + c_k(\bar \bfsigma_k), \\
    C_k &:= \fr{\pa^2 c_k}{\pa \bfsigma^2} (\bar \bfsigma_k), \quad \bar \bfb_k^\top := \fr{\pa c_k}{\pa \bfsigma}(\bar \bfsigma_k). \label{eq:grad-cost} 
\end{align}
At the same time, to validate the linearization, the trajectories should stay around the nominal trajectories, namely, the error variables should stay around zero. Pursuing these two objectives, and introducing the new state variable $\bfs_k \in \R^{n_x + n_{\sigma}}$ defined by
 $$   \bfs_k = [\tilde \bfx_k^\top, \tilde \bfsigma_k^\top]^\top ,$$ 
the dynamics and the cost function can be written with respect to $\bfs_k$ as 
\begin{align}
    \bfs_{k+1} &= {\mathcal A}_k \bfs_k + {\mathcal B}_k \tilde \bfpi_k(\bfs_k) + {\mathcal D}_k \bfw_k, \label{eq:motion-LQR} \\
    \tilde J &=  \E \left\{ \sum_{k=0}^{K-1} ( \bfs_k^\top \calQ_k \bfs_k + \bfb_k^\top \bfs_k + \tilde \bfu_k R_k \tilde \bfu_k ) \right\} \notag\\
     & \quad +\E \left\{ \bfs_K^\top \calQ_K \bfs_K + \bfb_K^\top \bfs_{K}  \right\} , \label{eq:cost-LQR}
\end{align}
where $n_s = n_x + n_{\sigma}$,  $\calA_k \in \R^{ n_s \times n_s}$, $\calB_k  \in \R^{n_s \times m}$, $\calD_k \in \R^{n_s \times n_x}$, $\calQ \in \R^{n_s\times n_s}$, $\bfb \in \R^{n_s}$ are defined by 
\begin{align} \label{eq:def-cal} 
\calA_k &= 
\left[ \hspace{-2mm} 
\begin{array}{cc}
    E_k &  \hspace{-2mm} 0  \\
    G_k E_k &\hspace{-2mm} F_k
\end{array}\hspace{-2mm} 
\right] 
,
\calB_k = 
\left[ \hspace{-2mm} 
\begin{array}{c}
    B_k  \\
    G_k B_k
\end{array}\hspace{-2mm} 
\right] ,   
\calD_k = 
\left[ \hspace{-2mm} 
\begin{array}{c}
    D_k \\
    G_k D_k  
\end{array} \hspace{-2mm} 
\right] , \\
\calQ_k &= \left[ 
\begin{array}{cc}
    Q^{(1)}_k & 0 \\
    0 & Q^{(2)}_k + C_k
\end{array}
\right], \quad 
\bfb_k =
\left[ 
\begin{array}{c}
\bf0 \\
\bar \bfb_k
\end{array}
\right], \label{eq:calQ-def} 
\end{align}
where $Q_k^{(1)} \in \R^{n_x \times n_x}, Q_k^{(2)} \in \R^{n_{\sigma} \times n_{\sigma}}, $ and $R_k \in \R^{m \times m}$ are weight matrices to be determined by the user. 

\subsection{LQR-based closed-loop feedback control} 

We derive a closed-loop feedback control policy to minimize \eqref{eq:cost-LQR} subject to \eqref{eq:motion-LQR}. Note that, unlike the standard LQR, the cost function \eqref{eq:cost-LQR} includes 
a linear term $\bfb_k^\top \bfs_k$, in both stage and terminal costs. Even then, one can show that the optimal control policy is a linear feedback control with a time-varying constant term, as stated below.

\begin{proposition}
The closed-loop control
\begin{align}
     \tilde \bfpi_k(\bfs_k) &= L^*_k \bfs_k + \eps^*_k,  \\
     L^*_k &= - ( R + {\mathcal B}_k^\top P_{k+1} {\mathcal B}_k )^{-1} {\mathcal B}_k^\top P_{k+1} {\mathcal A}_k, \label{eq:Kgain} \\ 
     \eps^*_k  &= - \fr{1}{2} ( R + {\mathcal B}_k^\top P_{k+1} {\mathcal B}_k )^{-1}   {\mathcal B}_k^\top \bfd_{k+1},  \label{eq:kappagain} 
\end{align}
where $P_k \in \R^{n_s \times n_s}$, $\bfd_k \in \R^{n_s}$, and $\delta_k \in \R$ at $k = K$ are given by 
\begin{align}
    P_K = \calQ_K, \quad \bfd_K = \bfb_K, \quad \delta_K = 0, 
\end{align}
and recursively updated from $k+1$ to $k$ for $k = K-1, K-2, \dots, 0$ as follows:%
\begin{align}
    P_k &= \calQ_k + {\mathcal A}_k^\top P_{k+1} {\mathcal A}_k \notag\\
     & \hspace{4mm} - {\mathcal A}_k^\top P_{k+1}  {\mathcal B}_k ( R + {\mathcal B}_k^\top P_{k+1} {\mathcal B}_k )^{-1} {\mathcal B}_k^\top P_{k+1} {\mathcal A}_k, \label{eq:Q-update}\\
    \bfd_k &= \bfb_k + \calA_k^\top \bfd_{k+1} \notag\\
    & \hspace{4mm}-  \calA_k^\top P_{k+1} \calB_k  (R + \calB_k^\top P_{k+1} \calB_k)^{-1} \calB_k^\top \bfd_{k+1}, \label{eq:c-update} \\
    \delta_k &= \delta_{k+1}   +  \tr({\mathcal D}_k^\top P_{k+1}  {\mathcal D}_k W_k) \notag\\
    & \hspace{4mm}- \fr{1}{4} \bfd_{k+1}^\top {\mathcal B}_k  ( R + {\mathcal B}_k^\top P_{k+1} {\mathcal B}_k )^{-1}   {\mathcal B}_k^\top \bfd_{k+1} \label{eq:delta-update}
\end{align}
minimizes the cost function \eqref{eq:cost-LQR} subject to the system dynamics \eqref{eq:motion-LQR} with optimal cost:
\begin{align*}
    \min_{\tilde \bfu_0, \dots, \tilde \bfu_{K-1}} \tilde J = V_0(\bfs) =  \bfs^\top P_0 \bfs + \bfd_0^\top \bfs + \delta_0, 
\end{align*}
for a given initial condition $\bfs_0 = \bfs$. 
\end{proposition}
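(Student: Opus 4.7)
The plan is to prove the proposition by backward induction using dynamic programming, showing that the value function at each stage has the quadratic-plus-linear-plus-constant form
\begin{align*}
V_k(\bfs) = \bfs^\top P_k \bfs + \bfd_k^\top \bfs + \delta_k,
\end{align*}
with $P_k$, $\bfd_k$, $\delta_k$ given by the stated recursions. The base case at $k=K$ is immediate since the terminal cost $\bfs_K^\top \calQ_K \bfs_K + \bfb_K^\top \bfs_K$ matches the ansatz with $P_K = \calQ_K$, $\bfd_K = \bfb_K$, and $\delta_K = 0$.

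For the inductive step, assume $V_{k+1}$ has the claimed form and apply the Bellman equation
\begin{align*}
V_k(\bfs) = \min_{\tilde \bfu_k} \E \bigl\{ \bfs^\top \calQ_k \bfs + \bfb_k^\top \bfs + \tilde \bfu_k^\top R_k \tilde \bfu_k + V_{k+1}(\bfs_{k+1}) \bigr\},
\end{align*}
where $\bfs_{k+1} = \calA_k \bfs + \calB_k \tilde \bfu_k + \calD_k \bfw_k$ by \eqref{eq:motion-LQR}. Next I would expand $\bfs_{k+1}^\top P_{k+1} \bfs_{k+1}$, take expectation over $\bfw_k$ using $\E[\bfw_k] = 0$ and $\E[\bfw_k \bfw_k^\top] = W_k$, so that the cross terms involving $\bfw_k$ vanish and the quadratic noise term produces $\tr(\calD_k^\top P_{k+1} \calD_k W_k)$. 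After collecting terms, the objective becomes a quadratic in $\tilde \bfu_k$ of the form
\begin{align*}
\tilde \bfu_k^\top (R_k + \calB_k^\top P_{k+1} \calB_k) \tilde \bfu_k + 2 \tilde \bfu_k^\top \bigl( \calB_k^\top P_{k+1} \calA_k \bfs + \tfrac{1}{2} \calB_k^\top \bfd_{k+1} \bigr) + (\text{terms independent of } \tilde \bfu_k).
\end{align*}

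Minimizing via first-order conditions, positive definiteness of $R_k + \calB_k^\top P_{k+1} \calB_k$ ensures a unique minimizer, which after rearrangement yields $\tilde \bfu_k^* = L_k^* \bfs + \eps_k^*$ with $L_k^*$ and $\eps_k^*$ exactly as in \eqref{eq:Kgain}--\eqref{eq:kappagain}. Substituting this optimal $\tilde \bfu_k^*$ back into the Bellman equation and completing the square in $\bfs$, the quadratic-in-$\bfs$ coefficient collapses to the Riccati-like recursion \eqref{eq:Q-update} for $P_k$, the linear-in-$\bfs$ coefficient reduces to \eqref{eq:c-update} for $\bfd_k$, and the constant terms (noise trace and the quadratic in $\bfd_{k+1}$) combine to give \eqref{eq:delta-update} for $\delta_k$, closing the induction.

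The main obstacle will not be conceptual but bookkeeping: carefully separating the terms that are quadratic, linear, and constant in $\bfs$ after the substitution of $\tilde \bfu_k^*$, while simultaneously tracking the noise-expectation trace and the $\bfd_{k+1}$-dependent contributions. The off-diagonal cross terms between $\bfs$ and $\bfd_{k+1}$ in the completion of the square are where sign and factor-of-$\tfrac{1}{2}$ errors most easily arise, so I would explicitly write $V_{k+1}(\calA_k \bfs + \calB_k \tilde \bfu_k)$ as a quadratic form in $(\bfs, \tilde \bfu_k)$ before minimizing, and only then take expectation over $\bfw_k$. Finally, evaluating $V_0(\bfs)$ at the given initial condition $\bfs_0 = \bfs$ recovers the claimed optimal cost.
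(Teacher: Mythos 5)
Your backward-induction dynamic-programming argument is correct: the quadratic-plus-linear-plus-constant ansatz for $V_k$, the expectation over $\bfw_k$ producing the trace term $\tr(\calD_k^\top P_{k+1}\calD_k W_k)$, and the completion of the square in $\tilde\bfu_k$ all reproduce the stated gains \eqref{eq:Kgain}--\eqref{eq:kappagain} and recursions \eqref{eq:Q-update}--\eqref{eq:delta-update} exactly. This is the same dynamic-programming route the paper indicates for its (omitted) proof, so no further comparison is needed.
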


The proof is done using a dynamic programming method, and is omitted in this paper due to space constraints. 


\section{Application to Active SLAM} \label{sec:SLAM} 

We apply the proposed planning method to an active SLAM problem, where a set of landmarks in the environment is regarded as the target state. The task is to estimate the landmark positions and the pose of a sensing robot, and to plan motion that reduces the uncertainty in these estimates.

\subsection{Differential-drive motion model and its linearization} 
 Let $\bfx = [\bfp^\top, \theta]^\top \in \R^3$ be the state of a ground robot, where $\bfp \in \R^2$ is the robot position and $\theta \in [-\pi, \pi)$ is the robot's heading angle. Let $\bfu \in [v, \omega] \in \R^2$ be the robot's control input, where $v$ is the linear velocity and $\omega$ is the angular velocity. We model the robot dynamics $\bff: \R^3 \times \R^2 \times \R^3 \to \R^3$ in \eqref{eq:AIA-motion} using a differential-drive kinematic model with time discretization $\tau$:
 \begin{align} 
 \bff(\bfx_k, \bfu_k, \bfw_k) &= \bfx_k + \tau \left[ 
 \begin{array}{c} 
 v_k \sinc \left( a_k \right) \cos \left( \theta_k + a_k \right) \\
  v_k \sinc \left( a_k \right) \sin \left( \theta_k +  a_k \right) \\
  \omega_k 
  \end{array} 
  \right]+ \bfw_k, \label{eq:SE2-motion}
\end{align} 
where $a_k := \fr{\omega_k \tau}{2}$. 
%
%

\subsection{Limited field-of-view sensing model}
The landmarks are modeled as points $\bfy^{(j)} \in \R^2$ in the environment, for all $j \in \{1, \dots, n_l\}$, where $n_l$ is a number of landmark. Noting that the landmark location is static, the matrices in the target dynamics given in \eqref{eq:AIA1} are set as:
\begin{align} \label{eq:mapping-static}
    A = I_{2n_l \times 2 n_l}, \quad \Xi_k = 0_{2 n_l \times 2 n_l}.  
\end{align}
For a given robot state $\bfx \in \R^3$ and the position $\bfy^{(j)} \in \R^2$ of the $j$-th landmark, we consider the robot body-frame coordinates of $\bfy^{(j)}$:
\begin{align}
   \bfq \left(\bfx, \bfy^{(j)} \right) =  R^\top (\theta) (\bfy^{(j)} - \bfp), \label{eq:hbar-def}
\end{align}
where $R: \R \to SO(2) \subset \R^{2 \times 2}$ is a 2-D rotation matrix of the robot pose. 
Here, we define the set of indices of the landmarks within the field of view $\calF \subset \R^2 $ of the robot, as follows:
\begin{align}
    \calI_{k, {\calF}} = \left\{ \forall j \in \{1, \dots, n_l\} \bigg|   \bfq \left(\bfx_k, \bfy^{(j)} \right)  \in \calF \right\}
\end{align}
We suppose to have both range and bearing measurements, which capture the relative landmark positions in robot frame as follows:
\begin{align}
    \bfz_k &= \left[ 
    \begin{array}{c} 
    \{ \bar \bfz(\bfx_k, \bfy^{(j)})\}_{j \in \calI_{k, \calF}}
    \end{array} 
    \right] \in \R^{2 |\calI_{k, \calF}|}, \label{eq:sensor-slam} \\
  \bar  \bfz(\bfx, \bfy^{(j)}) &=   \bfq (\bfx, \bfy^{(j)} )+ \Gamma^{1/2}(\bfx, \bfy^{(j)}) \bfv, \label{eq:slam-sensing} 
\end{align}
where $\Gamma: \R^3 \times \R^2 \to \bbS_{\succ 0}^{2 \times 2}$ is the sensor noise covariance and $\bfv \sim \N (0, I_2)$. 
Notice that the sensing model \eqref{eq:slam-sensing} is nonlinear with respect to the target state, while up to the previous sections we have considered a linear sensing model. In the application to active SLAM in this section, we deal with the nonlinear sensing model by employing the linearization around some estimate of the target state, and utilize it for both planning and SLAM.

\subsection{Differentiable field of view}

One special characteristic of the sensing model \eqref{eq:sensor-slam} with a limited field of view (FoV) is that the measurement dimension is dependent on the robot pose state at time $k$. This is caused by a binary (observable or unobservable) sensing within the FoV, which makes the sensing model non-differentiable with respect to the state and hence the control input. To deal with the challenge, we use a differentiable FoV proposed in \cite{koga2021active}, where the measurement is supposed to be obtained for all landmark states, while the noise covariance is supposed to become approximately infinity outside the FoV. Namely, the measurement function $\bfh: \R^3 \times \R^{2 n_l} \to \R^{2 n_l}$ and the noise covariance matrix are formalized as:
\begin{align}
    \bfh(\bfx, \bfy) &= [ \bfq(\bfx, \bfy^{(1)})^\top, \dots, \bfq(\bfx, \bfy^{(n_l)})^\top]^\top , \label{eq:h-def-fov} \\
    V(\bfx) &=  \diag(\bar V(\bfx, \hat \bfy^{(1)}), \dots, \bar V(\bfx, \hat \bfy^{(n_l)})) \in \R^{2 n_l \times 2 n_l}, \notag
\end{align}
where $\hat \bfy^{(j)} \in \R^{2}$ is an initial estimate of the $j$-th landmark state, $\bar V(\bfx, \bfy^{(j)}): \R^3 \times \R^2 \to \R^{2 \times 2}$ is a noise covariance of $j$-th landmark with differentiable FoV, given by 
\begin{align}
       \bar V(\bfx, \bfy^{(j)})  &= \left( 1 -  \Phi( d(\bfq(\bfx, \bfy^{(j)}), {\mathcal F})) \right)^{-1} \Gamma(\bfx, \bfy^{(j)}),  
          \notag 
          \end{align} 
where $\Phi : \R \to [0, \;1]$ is the Gaussian CDF defined by $\Phi(x) = \frac{1}{2} \left[ 1 + \textrm{erf} \left(\frac{x}{\sqrt{2} \kappa} - 2  \right) \right]$
and $d(\bfq, \calF)$ is a signed distance function defined below.

\begin{definition}
The \emph{signed distance function} $d : \R^2 \to \R $ associated with a set $\calF \subset \bbR^2$ is:
\begin{align}
     d(\bfq, {\mathcal F})  = 
     \begin{cases}
     - \min_{\bfq^* \in \pa {\mathcal F}}  \| \bfq - \bfq^*\|, & \textrm{if} \quad \bfq \in {\mathcal F}, \\
     \phantom{-} \min_{\bfq^* \in \pa {\mathcal F}} \|\bfq - \bfq^*\|, & \textrm{if} \quad \bfq \notin {\mathcal F},  
     \end{cases}
\end{align}
where $\pa {\mathcal F}$ is the boundary of ${\mathcal F}$.
\end{definition}

\begin{figure}[t]
 \vspace{1.5mm}
 \centering
  \includegraphics[width=0.8\linewidth]{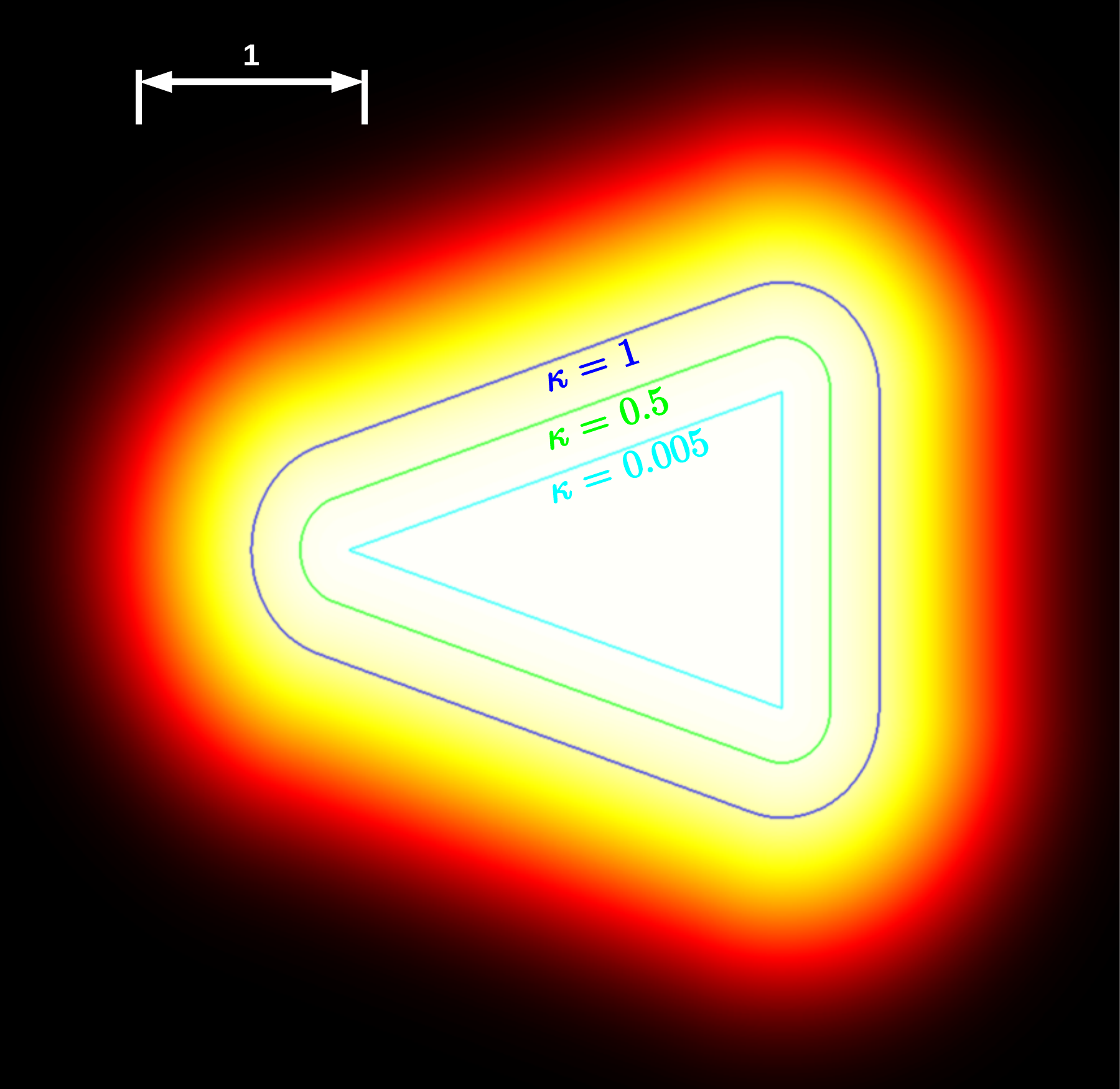}%
  \hfill%
  \caption{Visualization of differentiable FoV. The parameter $\kappa>0$ is tuned to obtain a desired smoothness in the FoV. }
  \label{fig:fov}
\end{figure}

A 2-D plot of $1 - \Phi( d(\bfq(\bfx, \bfy^{(j)}), {\mathcal F})$, which is the amplification factor in the differentiable FoV, is shown in Fig. \ref{fig:fov}. We observe the smooth transition from observable to unobservable space, which enables the gradient computation of the sensor information matrix needed in iCR.  In accordance with the linear sensing model in \eqref{eq:AIA1}, we take a linearization of \eqref{eq:h-def-fov}. Let $H: \R^3 \to \R^{2 n_l \times 2 n_l}$ be the measurement matrix defined by $H(\bfx) = \fr{\pa \bfh}{\pa \bfy}(\bfx, \hat \bfy)$. Then, with \eqref{eq:h-def-fov}, one can show $H$ is a block diagonal matrix, given by $H(\bfx) =  \diag(\bar H(\bfx_k, \hat \bfy^{(1)}), \dots, \bar H(\bfx_k, \hat \bfy^{(n_l)})) $, where $\bar H(\bfx_k, \bfy^{(j)}) : \R^3 \times \R^2 \to \R^{2 \times 2}$ is the measurement matrix for $j$-th landmark at time $k$, given as 
\begin{align}  \label{eq:Hj-def} 
\bar H(\bfx_k, \bfy^{(j)}) := \fr{\pa \bfq}{\pa \bfy^{(j)}}(\bfx_k, \bfy^{(j)}) = R^\top (\theta_k),
\end{align}
where we utilized \eqref{eq:hbar-def}. 
Since the product of the block diagonal matrix with same size is also a block diagonal matrix of each product, the sensor information matrix defined by \eqref{eq:M-def} is also a block diagonal matrix, given by $M(\bfx) =  \diag ( \bar M(\bfx, \hat \bfy^{(1)}), \dots, \bar M(\bfx, \hat \bfy^{(n_l)}) )$, where 
\begin{align*}
    \bar M(\bfx, \bfy^{(j)}) 
    &= \left( 1 -  \Phi( d( \bfq^{(j)}, {\mathcal F})) \right) R(\theta) \Gamma^{-1} R^\top (\theta)
\end{align*}
Since the matrix $M$ and the initial covariance matrix are block diagonal matrices with same size, and the fact that the landmark is static \eqref{eq:mapping-static}, Riccati update \eqref{eq:Riccati} leads to that the covariance matrix $\Sigma_k$ is also a block diagonal matrix with  the same size for all $k$, i.e., $\Sigma_k = \diag( \Sigma_k^{(1)}, \Sigma_k^{(2)}, \cdots, \Sigma_k^{(n_l)} )$, 
where each element satisfies 
\begin{align}
 \Sigma^{(j)}_{k+1} = \left ( \left (\Sigma^{(j)}_{k} \right)^{-1} + \bar M \left(\bfx_{k+1}, \hat \bfy^{(j)}_0 \right) \right)^{-1} \label{eq:Riccati-diag}
\end{align}

\subsection{iCR for active SLAM with trace minimization}
Following \cite{koga2021active}, 
in the derivation of iCR, the robot pose dynamics is described by $SE(2)$ pose kinematics, in which the robot state is defined in $SE(2)
\subset \R^{3 \times 3}$ 
. The $SE(2)$ pose kinematics is equivalent to the differential-drive motion model in \eqref{eq:SE2-motion}, through the exponential map $T = \exp(\hat \bfx)$ and the logarithm map $\bfx = \textrm{log}\left( T \right)^{\vee}$ \cite{barfoot2017state}. While in \cite{koga2021active} the cost is set to a log determinant of the covariance matrix at final time, in this paper we consider the cost to be a trace of the covariance matrix, which eases the computation of the gradient \eqref{eq:grad-cost} of the cost with respect to the vectorized covariance. Describing the robot dynamics by $SE(2)$ pose kinematics, and taking into account the differentiable field of view formulation presented above, iCR for active SLAM problem is developed, thereby the nominal trajectory $\{\bar T_{0:K}, \bar \Sigma_{0:K}, \bar \bfu_{0,K-1}\}$ for a deterministic $SE(2)$ pose kinematics is obtained.

\subsection{LQR gain derivation}

As presented in Secion \ref{sec:planning}, for deriving the LQR gains, provided the nominal iCR trajectory, Jacobian matrices \eqref{eq:Jacob-robot_2} \eqref{eq:Jacob-Riccati} must be computed. As stated above, iCR trajectory $\{\bar T_{0:K}, \bar \Sigma_{0:K}, \bar \bfu_{0,K-1}\}$ is obtained for a $SE(2)$ pose state $\bar T \in SE(2) \subset \R^{3 \times 3}$ and the covariance matrix $\bar \Sigma \in \bbS_{\succ 0}^{2 n_l \times 2n_l}$, which need to be converted to vectorized states. The robot vector state $\bfx \in \R^3$ is obtained by $\bfx = \textrm{log}\left( T \right)^{\vee}$. Regarding the covariance vector state $\bfsigma$, in Section \ref{sec:problem}, we define by the half-vectorization operator $\vech(\cdot)$ as a general case. However, in active SLAM we consider in this section, owing to the independency among covariances of each landmark as shown in \eqref{eq:Riccati-diag} and its symmetric property, we can define a covariance vector state with a lower dimension than the half-vectorization operator as follows. Let $\bfsigma^{(j)} \in \R^3$ be defined by $
    \bfsigma^{(j)} =
    \left[ 
    \bfe_1^\top \Sigma^{(j)} \bfe_1, 
    \bfe_1^\top \Sigma^{(j)} \bfe_2, 
    \bfe_2^\top \Sigma^{(j)} \bfe_2 
    \right]^\top $, where $\Sigma^{(j)} \in  \bbS_{\succ 0}^{2 \times 2}$ is a $j$-th block diagonal matrix in $\Sigma \in \bbS_{\succ 0}^{2 n_l \times 2n_l}$. Then, we define the covariance vector state by $ \bfsigma := [\bfsigma^{(1)}, \bfsigma^{(2)}, \dots, \bfsigma^{(n_l)}] \in \R^{3 n_l}$, which includes all the variables in the covariance matrix obeying the Riccati update. Hereafter, this conversion is denoted as $\bfsigma = \textrm{vecbl} (\Sigma)$.

    Through this conversion from the covariance matrix to the covariance vector state, we can derive the Jacobian matrix analytically. The computation steps of obtaining the LQR gain is shown in Algorithm 1, which includes "Jacob-robot" as a Jacobian matrices of the robot dynamics \eqref{eq:SE2-motion}, and "Jacob-Riccati" as a Jacobian matrices of the Riccati update provided in Appendix. "Grad-cost" computes the gradient of the cost function with respect to the vectorized covariance state as given in \eqref{eq:grad-cost}. Considering the cost of minimizing a trace of the covariance, the variables \eqref{eq:grad-cost} in the cost function are obtained as $C_k = 0_{3 n_l \times 3 n_l}$ and $\bar \bfb_k = [\bfeta^\top, \bfeta^\top, \dots, \bfeta^\top]^\top \in \R^{3 n_l}$ where $\bfeta = [1, 0, 1]^\top \in \R^3$, since each $\bfeta$ in $\bar \bfb_k$ corresponds to the diagonal element of $\Sigma^{(j)}$ with respect to the vectorized covariance $\bfsigma^{(j)}$.

\RestyleAlgo{boxruled}
\begin{algorithm}
	\caption{ LQR gain}
	\KwData{iCR trajectory $\bar T_{1:K}, \bar \bfu_{1:K-1},  \bar \Sigma_{1:K}$, weight matrices $Q_k^{(1)}, Q_k^{(2)}, R_k$} 
	$C, \bar \bfb \gets $ Grad-cost($\bar \bfsigma_K$) in \eqref{eq:grad-cost} \\
	$[P, \bfd, \delta] \gets [\calQ_K, \bfb_K, 0] $ from \eqref{eq:calQ-def} \\
	Solve LQR gain backward in time\\
	\For{$k\gets K-1$ \KwTo $0$}{
	$[\bar \bfx_k, \bar \bfx_{k+1}, \bar \bfsigma_k] \gets [\textrm{log}\left(\bar T_k \right)^{\vee}, \textrm{log}\left(\bar T_{k+1} \right)^{\vee}, \textrm{vecbl}\left( \bar \Sigma_k \right)]$ 
	\\
	$E$, $B$, $D$ $\gets$ Jacob-robot($\bar \bfx_k$, $\bar \bfu_k$)\\
	$F$, $G \gets $ Jacob-Riccati($\bar \bfsigma_k$, $\bar \bfx_{k+1}$) \\
	$\calA, \calB, \calD \gets $ \eqref{eq:def-cal} using $[E, B, D, F, G]$\\
	$C, \bar \bfb \gets $ Grad-cost($\bar \bfsigma_k$) in \eqref{eq:grad-cost} \\
	$\calQ, \bfb \gets $ \eqref{eq:calQ-def} using $[Q_k^{(1)}, Q_k^{(2)}, C, \bar \bfb]$ \\
	$[L^*_k, \eps^*_k] \gets $ \eqref{eq:Kgain}, \eqref{eq:kappagain} using $[P, \bfd, \calA, \calB, R] $\\
	$\bfd \gets $ \eqref{eq:c-update} using $[\bfd, \bfb, P, \calA, \calB, R]$ \\
	$\delta \gets $ \eqref{eq:delta-update} using $[\delta, P, \bfd, \calB, R, \calD, W_k]$ \\
	$P \gets $ \eqref{eq:Q-update} using $[P, \calQ, \calA, \calB, R]$ \\
    }
  \textbf{Output:} $L^*_{0:K-1}, \eps^*_{0:K-1}$ 
\end{algorithm}

\begin{figure}[t]
 \vspace{1.5mm}
 \centering
  \includegraphics[width=\linewidth]{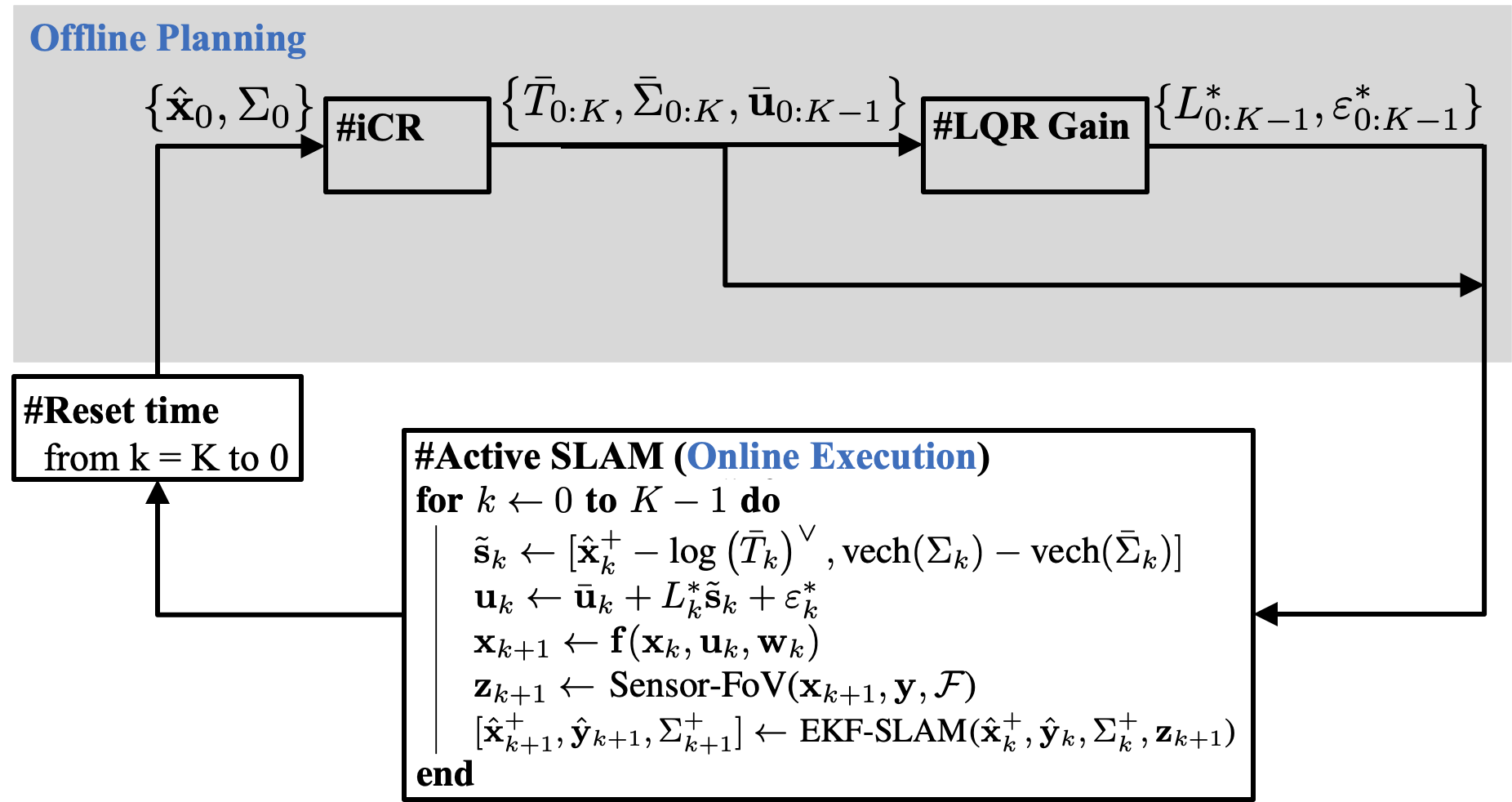}%
  \hfill%
  \caption{Diagram of active SLAM via iCR-LQR.}
  \label{fig:icr-lqr}
\end{figure}

\subsection{EKF-SLAM} \label{subsec:EKF}

We construct the estimator of both the landmark position and the robot position by Extended Kalman Filter (EKF). 
The probabilistic landmark position and the robot position are set as a Gaussian distribution: $\left[ 
    \begin{array}{c}
    \bfx_k \\ \bfy
    \end{array} 
    \right] 
    | \bfz_{1:k} \sim  \N \left(
    \left[
    \begin{array}{c}
    \hat \bfx_k^+ \\ \hat \bfy_k
    \end{array} 
    \right] , 
    \Sigma_k^+, 
    \right) $ for posteriori estimate, and $
  \left[ 
    \begin{array}{c}
    \bfx_{k+1} \\ \bfy
    \end{array} 
    \right] 
    | \bfz_{1:k} \sim  \N \left(
    \left[
    \begin{array}{c}
    \hat \bfx_{k+1}^- \\ \hat \bfy_k
    \end{array} 
    \right] , 
    \Sigma_{k+1}^-, 
    \right) 
$ for a priori estimate. Once the measurement is obtained as given in \eqref{eq:sensor-slam}, which is a measured landmark relative position in robot-body frame within FoV, we reconstruct the measured state as $\bfz_k = [\bfz_k^{(1)}, \bfz_k^{(2)}, \dots, \bfz_k^{(n_l)}] \in \R^{2 n_l}$, where 
\begin{align}
    \bfz_k^{(j)} = \begin{cases} 
    \bar \bfz(\bfx_k, \bfy^{(j)}), \quad \textrm{if } \quad  j \in \calI_{k, \calF} \\
     \bfq(\hat \bfx_k^-, \hat \bfy_{k-1}^{(j)}), \quad \textrm{otherwise} 
    \end{cases} \label{eq:sensor-recon} 
\end{align}
Using the reconstructed sensor state \eqref{eq:sensor-recon}, we implement EKF for SLAM by updating the mean and covariance of both priori and posteriori estimates. 
Note that, since the innovation term in EKF is $\bfz_k - \bfh(\hat \bfx_k^-, \hat \bfy_{k-1})$, where $\bfh$ is given by \eqref{eq:h-def-fov}, the reconstructed sensor state \eqref{eq:sensor-recon} makes the innovation term zero in $j$-th landmark estimate for all $j \notin \calI_{k, \calF}$. Namely, all the landmark estimate outside FoV does not have an update through applying \eqref{eq:sensor-recon} to EKF-SLAM.


A diagram depicting the structure of the entire proposed algorithm is shown in Fig. \ref{fig:icr-lqr}. 

\subsection{Evaluation} 

\begin{figure}[t]
    \centering
    \begin{subfigure}[t]{0.48\linewidth}
    {%
    \setlength{\fboxsep}{0pt}%
    \setlength{\fboxrule}{0.01pt}%
    \fbox{\includegraphics[width=\linewidth]{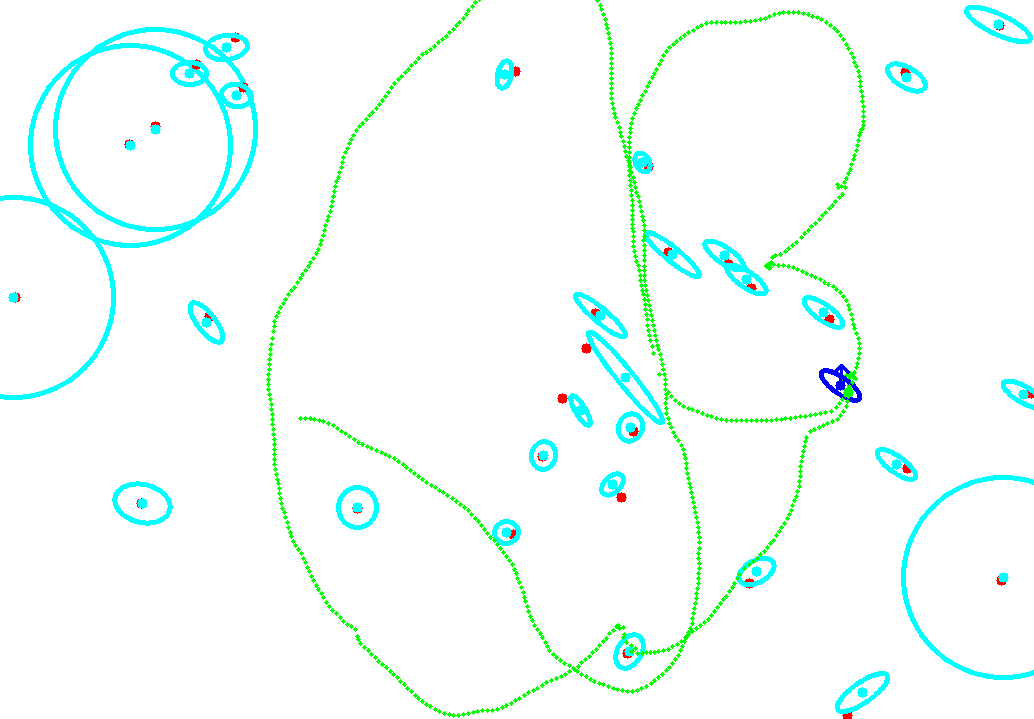}}%
    }%
    \caption{Closed-loop iCR + LQR}
    \end{subfigure}%
    \hfill%
    \begin{subfigure}[t]{0.48\linewidth}
    {%
    \setlength{\fboxsep}{0pt}%
    \setlength{\fboxrule}{0.01pt}%
    \fbox{\includegraphics[width=\linewidth]{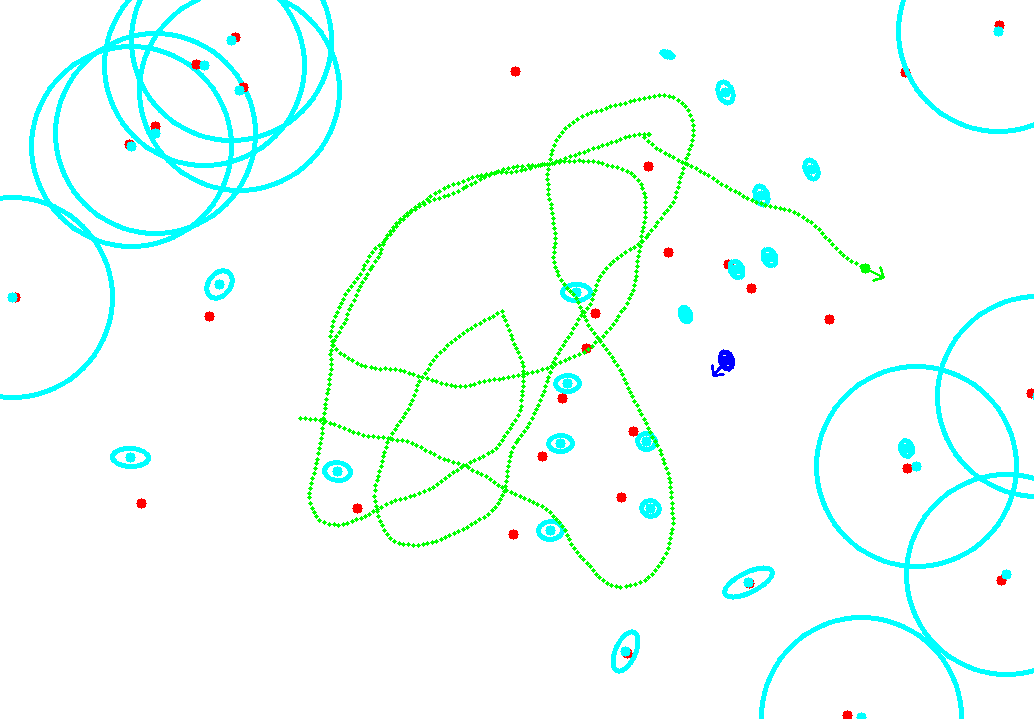}}%
    }%
    \caption{Open-loop iCR}
    \end{subfigure}
    \caption{Active SLAM via closed-loop and open-loop control policies. The green dotted line shows the robot ground truth trajectory. The blue dot shows robot pose, while the surrounding ellipse corresponds to the covariance of robot position. Red dots indicate the ground truth landmark positions. Cyan dots and ellipses indicate the mean and covariance of landmarks, respectively.}
    \label{fig:timelapse}
\end{figure}

\begin{figure*}[t]
    \centering
    \includegraphics[width=\linewidth]{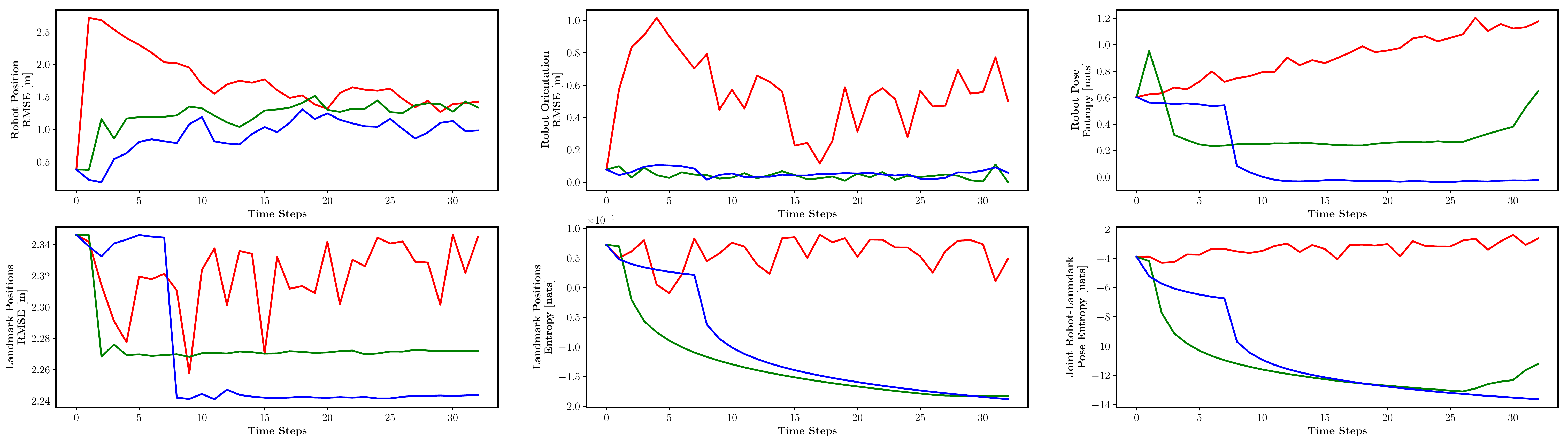}
    \caption{Simulation results for random policy (red), iCR (green), and iCR with closed-loop LQR control (blue). Top three figures illustrate the root mean square error (RMSE) of robot position and orientation, as well as robot pose entropy. The bottom figures show RMSE of landmark positions, average landmark position entropy, and the joint robot-landmark pose entropy. The results are averaged over $5$ random environments in which all three methods were executed.}
    \label{fig:sim_res}
\end{figure*}

We examine the performance of the proposed method in a simulated environment with dimensions $100 \text{[m]} \times 70 \text{[m]}$, where the landmarks are located following a uniform random distribution. The robot follows the $SE(2)$ motion model of \eqref{eq:SE2-motion} with $W_k = \diag \left(0.1, 0.1, 0.01\right)$, and its on-board sensor measures the relative position of visible landmarks in the robot frame. The field of view ${\mathcal F}$ is set as an isosceles triangle with height $20$ [m] and the angle between the two legs equal to $120^{\circ}$. The relative position measurements are corrupted by an additive Gaussian noise with zero mean and covariance $\Gamma = \diag \left(0.1, 0.1\right)$. The control $\bfu_k$ and measurements $\bfz_{k+1}$ are given to EKF-SLAM for state estimation, where we assume the noise covariances $W_k$ and $\Gamma$ are known to EKF-SLAM. We initialize the mean $\left[\hat{\bf{x}}^+_0 \hat{\bf{y}}^+_0\right]^\top$ with the ground truth position of the robot and landmarks, added by a Gaussian noise with variance $25$ [m$^2$], while the state covariance is initialized as $\Sigma^+_0 = 25 I_{n_s \times n_s}$. During each planning phase, we begin by computing the initial iCR control sequence $\bar{\bfu}_{0:K-1}$ for planning horizon $K = 5$, where the differentiable FoV is parameterized by $\kappa = 10$ and the gradient decent update is done for $10$ iterations with $\bfalpha = \diag \left(0.005, 0.0005\right)$. The obtained control sequence $\bar{\bfu}_{0:K-1}$ is regularized by LQR where $Q_k^{(1)} = \diag \left(10, 10, 1\right)$, $Q_k^{(2)} = I_{n_l \times n_l} \otimes \diag \left(1, 0.1, 1\right)$, and $R_k = 
\left[
\begin{array}{cc}
    20 & 5 \\
    5 & 10
\end{array}
\right]$. The closed-loop control $\bfu_k$ is applied to the robot for $K$ steps while the state mean and covariance are updated on each step. Fig.~\ref{fig:timelapse} shows examples of active SLAM using open-loop and closed-loop control policies. We observe that the LQR closed-loop control allows for larger exploration of the environment since the trajectory is constantly corrected by LQR, while for the case of open-loop policy, landmark entropy increases after execution of the control sequence, which encourages re-visiting the nearby landmarks and resulting in limited exploration.

Fig.~\ref{fig:sim_res} summarizes simulation results for a random policy, open-loop control obtained from iCR, and LQR closed-loop control over iCR output. Both open-loop and closed-loop policies outperform the random policy; however, the policy regulated by LQR shows more long-term stability and uncertainty reduction. This can be directly attributed to the cost function for LQR, where stability and landmark position uncertainty are explicitly factored in the model. 


\section{Conclusion} \label{sec:conclusion} 

This paper developed a method for continuous trajectory optimization for active information acquisition problems.
The problem is formalized as a stochastic optimal control problem to minimize an uncertainty measure over the target state. The novelty of the proposed method lies in (i) taking into account the process noise in robot dynamics, (ii) introducing a differentiable field of view for enabling the gradient computation, and (iii) planning an open-loop trajectory by iCR and a closed-loop control policy by LQR applied to a linearized system around iCR trajectory. We demonstrated the efficacy of the proposed method in a simulation of landmark-based active SLAM, aiming to map the landmarks and localize robot accurately.  

\bibliographystyle{IEEEtran}
\bibliography{BIB_ACC2022.bib}

\begin{thebibliography}{10}
\providecommand{\url}[1]{#1}
\csname url@rmstyle\endcsname
\providecommand{\newblock}{\relax}
\providecommand{\bibinfo}[2]{#2}
\providecommand\BIBentrySTDinterwordspacing{\spaceskip=0pt\relax}
\providecommand\BIBentryALTinterwordstretchfactor{4}
\providecommand\BIBentryALTinterwordspacing{\spaceskip=\fontdimen2\font plus
\BIBentryALTinterwordstretchfactor\fontdimen3\font minus
  \fontdimen4\font\relax}
\providecommand\BIBforeignlanguage[2]{{%
\expandafter\ifx\csname l@#1\endcsname\relax
\typeout{** WARNING: IEEEtran.bst: No hyphenation pattern has been}%
\typeout{** loaded for the language `#1'. Using the pattern for}%
\typeout{** the default language instead.}%
\else
\language=\csname l@#1\endcsname
\fi
#2}}

\bibitem{cadena2016past}
C.~Cadena, L.~Carlone, H.~Carrillo, Y.~Latif, D.~Scaramuzza, J.~Neira, I.~Reid,
  and J.~J. Leonard, ``Past, present, and future of simultaneous localization
  and mapping: Toward the robust-perception age,'' \emph{IEEE Transactions on
  Robotics}, vol.~32, no.~6, pp. 1309--1332, 2016.

\bibitem{rosen2021advances}
D.~M. Rosen, K.~J. Doherty, A.~Ter{\'a}n~Espinoza, and J.~J. Leonard,
  ``Advances in inference and representation for simultaneous localization and
  mapping,'' \emph{Annual Review of Control, Robotics, and Autonomous Systems},
  vol.~4, pp. 215--242, 2021.

\bibitem{delmerico2017active}
J.~Delmerico, E.~Mueggler, J.~Nitsch, and D.~Scaramuzza, ``Active autonomous
  aerial exploration for ground robot path planning,'' \emph{IEEE Robotics and
  Automation Letters}, vol.~2, no.~2, pp. 664--671, 2017.

\bibitem{yamauchi1998frontier}
B.~Yamauchi, ``A frontier-based approach for autonomous exploration,'' in
  \emph{IEEE International Symposium on Computational Intelligence in Robotics
  and Automation}, 1997, pp. 146--151.

\bibitem{elfes1995robot}
A.~Elfes, ``Robot navigation: Integrating perception, environmental constraints
  and task execution within a probabilistic framework,'' in \emph{International
  Workshop on Reasoning with Uncertainty in Robotics}.\hskip 1em plus 0.5em
  minus 0.4em\relax Springer, 1995, pp. 91--130.

\bibitem{moorehead2001autonomous}
S.~J. Moorehead, R.~Simmons, and W.~L. Whittaker, ``Autonomous exploration
  using multiple sources of information,'' in \emph{IEEE International
  Conference on Robotics and Automation (ICRA)}, vol.~3, 2001, pp. 3098--3103.

\bibitem{grocholsky2002information}
B.~Grocholsky, ``Information-theoretic control of multiple sensor platforms,''
  \emph{Ph.D. dissertation, Univ. of Sydney, 2002 [Online]. Available:
  http://www.acfr.usyd.edu.au}, 2002.

\bibitem{hollinger2014sampling}
G.~A. Hollinger and G.~S. Sukhatme, ``Sampling-based robotic information
  gathering algorithms,'' \emph{The International Journal of Robotics
  Research}, vol.~33, no.~9, pp. 1271--1287, 2014.

\bibitem{charrow2015information}
B.~Charrow, S.~Liu, V.~Kumar, and N.~Michael, ``Information-theoretic mapping
  using cauchy-schwarz quadratic mutual information,'' in \emph{IEEE
  International Conference on Robotics and Automation (ICRA)}, 2015, pp.
  4791--4798.

\bibitem{zhang2020fsmi}
Z.~Zhang, T.~Henderson, S.~Karaman, and V.~Sze, ``{FSMI}: Fast computation of
  shannon mutual information for information-theoretic mapping,'' \emph{The
  International Journal of Robotics Research}, vol.~39, no.~9, pp. 1155--1177,
  2020.

\bibitem{saulnier2020information}
K.~Saulnier, N.~Atanasov, G.~J. Pappas, and V.~Kumar, ``Information theoretic
  active exploration in signed distance fields,'' in \emph{IEEE International
  Conference on Robotics and Automation (ICRA)}, 2020, pp. 4080--4085.

\bibitem{asgharivaskasi2021active}
A.~Asgharivaskasi and N.~Atanasov, ``Active {B}ayesian multi-class mapping from
  range and semantic segmentation observation,'' in \emph{IEEE International
  Conference on Robotics and Automation (ICRA)}, 2021.

\bibitem{koga2021active}
S.~Koga, A.~Asgharivaskasi, and N.~Atanasov, ``Active exploration and mapping
  via iterative covariance regulation over continuous {$SE (3)$}
  trajectories,'' in \emph{IEEE/RSJ International Conference on Intelligent
  Robots and Systems (IROS)}, 2021.

\bibitem{prentice2009belief}
S.~Prentice and N.~Roy, ``The belief roadmap: Efficient planning in belief
  space by factoring the covariance,'' \emph{The International Journal of
  Robotics Research}, vol.~28, no. 11-12, pp. 1448--1465, 2009.

\bibitem{agha2011firm}
A.-A. Agha-Mohammadi, S.~Chakravorty, and N.~M. Amato, ``{FIRM}: Feedback
  controller-based information-state roadmap -- a framework for motion planning
  under uncertainty,'' in \emph{IEEE/RSJ International Conference on
  Intelligent Robots and Systems}, 2011, pp. 4284--4291.

\bibitem{agha2018slap}
A.-a. Agha-mohammadi, S.~Agarwal, S.-K. Kim, S.~Chakravorty, and N.~M. Amato,
  ``{SLAP}: Simultaneous localization and planning under uncertainty via
  dynamic replanning in belief space,'' \emph{IEEE Transactions on Robotics},
  vol.~34, no.~5, pp. 1195--1214, 2018.

\bibitem{van2012motion}
J.~Van Den~Berg, S.~Patil, and R.~Alterovitz, ``Motion planning under
  uncertainty using iterative local optimization in belief space,'' \emph{The
  International Journal of Robotics Research}, vol.~31, no.~11, pp. 1263--1278,
  2012.

\bibitem{todorov2005generalized}
E.~Todorov and W.~Li, ``A generalized iterative {LQG} method for
  locally-optimal feedback control of constrained nonlinear stochastic
  systems,'' in \emph{American Control Conference (ACC)}, 2005, pp. 300--306.

\bibitem{rahman2021uncertainty}
S.~Rahman and S.~L. Waslander, ``Uncertainty-constrained differential dynamic
  programming in belief space for vision based robots,'' \emph{IEEE Robotics
  and Automation Letters}, vol.~6, no.~2, pp. 3112--3119, 2021.

\bibitem{carlone2014active}
L.~Carlone, J.~Du, M.~K. Ng, B.~Bona, and M.~Indri, ``Active {SLAM} and
  exploration with particle filters using {K}ullback-{L}eibler divergence,''
  \emph{Journal of Intelligent \& Robotic Systems}, vol.~75, no.~2, pp.
  291--311, 2014.

\bibitem{leung2006active}
C.~Leung, S.~Huang, and G.~Dissanayake, ``Active {SLAM} using model predictive
  control and attractor based exploration,'' in \emph{2006 IEEE/RSJ
  International Conference on Intelligent Robots and Systems}.\hskip 1em plus
  0.5em minus 0.4em\relax IEEE, 2006, pp. 5026--5031.

\bibitem{atanasov2015decentralized}
N.~Atanasov, J.~Le~Ny, K.~Daniilidis, and G.~J. Pappas, ``Decentralized active
  information acquisition: Theory and application to multi-robot {SLAM},'' in
  \emph{IEEE International Conference on Robotics and Automation (ICRA)}, 2015,
  pp. 4775--4782.

\bibitem{Kantaros_InformationGathering_RSS19}
Y.~Kantaros, B.~Schlotfeldt, N.~Atanasov, and G.~J. Pappas, ``Asymptotically
  optimal planning for non-myopic multi-robot information gathering,'' in
  \emph{Robotics: Science and Systems (RSS)}, 2019.

\bibitem{martinez2009bayesian}
R.~Martinez-Cantin, N.~De~Freitas, E.~Brochu, J.~Castellanos, and A.~Doucet,
  ``A {Bayesian} exploration-exploitation approach for optimal online sensing
  and planning with a visually guided mobile robot,'' \emph{Autonomous Robots},
  vol.~27, no.~2, pp. 93--103, 2009.

\bibitem{Atanasov14ICRA}
N.~Atanasov, J.~{Le Ny}, K.~Daniilidis, and G.~J. Pappas, ``Information
  acquisition with sensing robots: Algorithms and error bounds,'' in \emph{IEEE
  International Conference on Robotics and Automation (ICRA)}, 2014, pp.
  6447--6454.

\bibitem{le2009trajectory}
J.~Le~Ny and G.~J. Pappas, ``On trajectory optimization for active sensing in
  {Gaussian} process models,'' in \emph{Proceedings of the 48h IEEE Conference
  on Decision and Control (CDC) held jointly with 2009 28th Chinese Control
  Conference}.\hskip 1em plus 0.5em minus 0.4em\relax IEEE, 2009, pp.
  6286--6292.

\bibitem{optimalExpDesign}
A.~P{\'a}zman, \emph{Foundations of Optimum Experimental Design}.\hskip 1em
  plus 0.5em minus 0.4em\relax New York, NY, USA: Springer, 1986.

\bibitem{carrillo2012comparison}
H.~Carrillo, I.~Reid, and J.~A. Castellanos, ``On the comparison of uncertainty
  criteria for active {SLAM},'' in \emph{IEEE International Conference on
  Robotics and Automation}, 2012, pp. 2080--2087.

\bibitem{zheng2021belief}
D.~Zheng, J.~Ridderhof, P.~Tsiotras, and A.-a. Agha-mohammadi, ``Belief space
  planning: A covariance steering approach,'' \emph{arXiv preprint
  arXiv:2105.11092}, 2021.

\bibitem{barfoot2017state}
T.~D. Barfoot, \emph{State estimation for robotics}.\hskip 1em plus 0.5em minus
  0.4em\relax Cambridge University Press, 2017.

\end{thebibliography}

\appendix

\subsection{Jacobian matrix of Riccati update}  
Let $\bfsigma^{(j)}_k \in \R^3$ and $\bfm \in \R^3$ be the vectors defined by 
\begin{align}
    \bfsigma^{(j)}_k &=
    \left[ 
    \begin{array}{c} 
    \bfe_1^\top \Sigma^{(j)}_k \bfe_1 \\
    \bfe_1^\top \Sigma^{(j)}_k \bfe_2  \\
    \bfe_2^\top \Sigma^{(j)}_k \bfe_2 
    \end{array}
    \right] ,
    \bfm^{(j)}(\bfx) =
    \left[ 
    \begin{array}{c} 
    \bfe_1^\top \bar M(\bfx, \hat \bfy^{(j)}_0) \bfe_1 \\
    \bfe_1^\top \bar M(\bfx, \hat \bfy^{(j)}_0)  \bfe_2  \\
    \bfe_2^\top \bar M(\bfx, \hat \bfy^{(j)}_0)  \bfe_2 
    \end{array}
    \right] \notag 
\end{align}
Let the entire vector states be defined by $ \bfsigma := [\bfsigma^{(1)}, \bfsigma^{(2)}, \dots, \bfsigma^{(n_l)}] \in \R^{3 n_l}$ and the update function $ \bfg (\bfsigma, \bfx) = \left[ 
    \begin{array}{ccc}
         \bar \bfg (\bfsigma^{(1)}, \bfm^{(1)}(\bfx)) &  \cdots &  \bar \bfg (\bfsigma^{(n_l)}, \bfm^{(n_l)}(\bfx))
    \end{array}
    \right] $. 
Then,  
the Jacobian matrices in \eqref{eq:Jacob-Riccati} are obtained explicitly as stated in the following proposition.  
\begin{proposition} \label{prop:jacob-riccati}
The Jacobian matrices \eqref{eq:Jacob-Riccati} of the vector dynamics of Riccati update can be obtained by 
\begingroup
\allowdisplaybreaks
\begin{align*}
     &\hspace{-2mm} F_k =  \fr{\pa \bfg}{\pa \bfsigma} = \diag(F_k^{(1)}, \dots, F_k^{(n_l)}), \\
     &\hspace{-2mm} F_k^{(j)} =   
    \left[ 
    \begin{array}{c} 
    \fr{\pa g_1}{\pa \bfsigma}(\bfsigma, \bfm) \\
    \fr{\pa g_2}{\pa \bfsigma}(\bfsigma, \bfm) \\
    \fr{\pa g_3}{\pa \bfsigma}(\bfsigma, \bfm)
    \end{array} 
    \right] \bigg |_{[\bfsigma,\bfm] =[ \bfsigma^{(j)}_k, \bfm^{(j)}(\bfx_{k+1})]}, \\
    &\hspace{-2mm} G_k = \fr{\pa \bfg}{\pa \bfx} 
    =  \left[ 
    \begin{array}{c}
        \fr{\pa \bar \bfg}{\pa \bfx} (\bfsigma^{(1)}_k, \bfm^{(1)}(\bfx_{k+1})) \\
          \vdots \\
         \fr{\pa \bar \bfg}{\pa \bfx} (\bfsigma^{(n_l)}_k, \bfm^{(n_l)}(\bfx_{k+1}))
    \end{array}
    \right] , \\
    & \hspace{-2mm}\fr{\pa \bar \bfg}{\pa \bfx} 
    =  \left[ 
    \begin{array}{c} 
    \fr{\pa g_1}{\pa \bfm}(\bfsigma, \bfm) \\
    \fr{\pa g_2}{\pa \bfm}(\bfsigma, \bfm) \\
    \fr{\pa g_3}{\pa \bfm}(\bfsigma, \bfm)
    \end{array} 
    \right]  
    \left[ 
    \begin{array}{c} 
    \fr{\pa }{\pa \bfx} \left( \bfe_1^\top  M^{(j)}(\bfx, \hat \bfy^{(j)}_0) \bfe_1 \right) \\
    \fr{\pa }{\pa \bfx} \left(\bfe_1^\top  M^{(j)}(\bfx, \hat \bfy^{(j)}_0)  \bfe_2 \right) \\
    \fr{\pa }{\pa \bfx} \left(\bfe_2^\top  M^{(j)}(\bfx, \hat \bfy^{(j)}_0)  \bfe_2 \right)
    \end{array}
    \right] 
\end{align*}
\endgroup
where $\fr{\pa g_i}{\pa \bfsigma}: \R^3 \times  \R^3 \to \R^3$ and $\fr{\pa g_i}{\pa \bfm}: \R^3 \times  \R^3 \to \R^3$ as functions of $\bfsigma = [\sigma_1, \sigma_2, \sigma_3] \in \R^3$ and $\bfm = [m_1, m_2, m_3] \in \R^3$ are given by 
\begin{align*}
\fr{\pa g_i}{\pa \bfsigma}(\bfsigma, \bfm) &= f(\bfsigma, \bfm)^{-1} \left( \bfr_i -   g_i(\bfsigma, \bfm) \fr{\pa f}{\pa \bfsigma} \right), \\
  \fr{\pa f}{\pa \bfsigma}(\bfsigma, \bfm) &= \left[  
  \begin{array}{c}
  m_1  + \sigma_3 (m_1 m_3 -  m_2^2) \\
  2 m_2 - 2 \sigma_2 (m_1 m_3 -  m_2^2)  \\
   m_3 + \sigma_1 (m_1 m_3 -  m_2^2) 
  \end{array} 
  \right]^\top , \end{align*}
\begin{align*}   
\bfr_1 &= \left[ 
\begin{array}{ccc}
1 + \sigma_3 m_3 &  - 2 \sigma_2 m_3 &   \sigma_1 m_3
\end{array} 
\right] , \\
 \bfr_2 &= \left[
 \begin{array}{ccc}
- \sigma_3 m_2  & 1 + 2  \sigma_2 m_2  &  - \sigma_1 m_2
\end{array} 
\right], \\
\bfr_3 &=  \left[ 
\begin{array}{ccc}
\sigma_3 m_1 & - 2 \sigma_2 m_1 & 1 + \sigma_1 m_1 
\end{array} 
\right] , 
\end{align*}
\begin{align*}
\fr{\pa g_i}{\pa \bfm}(\bfsigma, \bfm) &=f(\bfsigma, \bfm)^{-1} \left((\sigma_1 \sigma_3 - \sigma_2^2) \widetilde \bfr_i^\top -   g_i(\bfsigma, \bfm) \fr{\pa f}{\pa \bfm} \right)
\\
  \fr{\pa f}{\pa \bfm}(\bfsigma, \bfm) &= \left[  
  \begin{array}{c}
  \sigma_1 + ( \sigma_1 \sigma_3 - \sigma_2^2) m_3  \\
  2 \sigma_2 - 2 (\sigma_1 \sigma_3 -  \sigma_2^2 )  m_2  \\
   \sigma_3 + ( \sigma_1 \sigma_3 - \sigma_2^2) m_1
  \end{array} 
  \right]^\top , 
\end{align*}
$ \widetilde \bfr_1 = \bfe_3, \widetilde\bfr_2 = - \bfe_2, \widetilde \bfr_3 = \bfe_1$, and $\fr{\pa \bfe_i^\top M^{(j)} \bfe_l}{\pa \bfx} : \R^3 \times \R^2 \to \R^3$ is given by 
\begin{align*}
     \fr{\pa \bfe_i^\top M^{(j)} \bfe_l}{\pa \bfx}&     =  -  \Phi'( d( \bfq^{(j)}, {\mathcal F})) \bfe_i^\top R(\theta ) \Gamma^{-1} R^\top (\theta )\bfe_l \fr{\pa d }{\pa \bfq} \fr{\pa \bfq }{\pa \bfx} \notag\\
    &\hspace{-5mm} +  \left( 1 -  \Phi( d( \bfq^{(j)}, {\mathcal F})) \right) \bfe_i^\top D(R'(\theta) \Gamma^{-1} R^\top (\theta)) \bfe_l \bfe_3^\top , \\
    \fr{\pa \bfq}{\pa \bfx} &= R'^\top (\theta) (\bfy^{(j)} - Q \bfx) \bfe_3^\top - R^\top (\theta) Q ,   
\end{align*}
where $D()$ is an operator defined by $D(A) = A + A^\top $, for any square matrix $A \in \R^{n \times n}$ with a positive integer $n \in \bbN$. 

\end{proposition}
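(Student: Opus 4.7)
The plan is to derive the Jacobians by leveraging (i) the block-diagonal separability of the Riccati update across landmarks and (ii) the explicit closed-form inverse available for $2 \times 2$ symmetric matrices. First, note that because the initial covariance, the sensor information matrix $M(\bfx)$, and the propagation matrix $A = I_{2n_l}$ are all $2 n_l \times 2 n_l$ block diagonal with $2 \times 2$ blocks, the update in \eqref{eq:Riccati-diag} preserves this structure, so $\bfg$ decomposes entrywise into $\bar\bfg(\bfsigma^{(j)}, \bfm^{(j)}(\bfx))$ acting on each 3-dimensional sub-vector. This immediately yields the block-diagonal form asserted for $F_k$ and the row-stacked form of $G_k$; it therefore suffices to obtain $\partial \bar\bfg/\partial\bfsigma$ and $\partial \bar\bfg/\partial\bfm$ for a single landmark, together with $\partial \bfm^{(j)}/\partial\bfx$.

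Next I would obtain $\bar\bfg$ in closed form. Writing $\Sigma = \bigl[\begin{smallmatrix}\sigma_1 & \sigma_2 \\ \sigma_2 & \sigma_3\end{smallmatrix}\bigr]$ and analogously $M$, the identity $(\Sigma^{-1} + M)^{-1} = (I + \Sigma M)^{-1}\Sigma$ together with the $2 \times 2$ inverse formula expresses each of the three distinct entries of $\Sigma_{k+1}$ as a ratio $g_i(\bfsigma,\bfm) = N_i(\bfsigma,\bfm)/f(\bfsigma,\bfm)$, where the common denominator is the scalar $f = \det(I + \Sigma M) = 1 + \tr(\Sigma M) + \det(\Sigma)\det(M) = 1 + \sigma_1 m_1 + 2\sigma_2 m_2 + \sigma_3 m_3 + (\sigma_1\sigma_3 - \sigma_2^2)(m_1 m_3 - m_2^2)$. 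Direct differentiation of $f$ recovers the stated $\partial f/\partial\bfsigma$ and $\partial f/\partial\bfm$, and the quotient rule applied to each $g_i$ produces the stated $\partial g_i/\partial\bfsigma$ and $\partial g_i/\partial\bfm$, where $\bfr_i$ and $\widetilde\bfr_i$ collect the entry-specific numerator gradients and the factor $\sigma_1\sigma_3 - \sigma_2^2 = \det\Sigma$ appears naturally in $\partial g_i/\partial\bfm$ because $N_i$ is affine in $\bfm$ with linear coefficient scaled by $\det\Sigma$.

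For $G_k$ I would then apply the chain rule $\partial \bar\bfg/\partial\bfx = (\partial \bar\bfg/\partial\bfm)\,(\partial \bfm^{(j)}/\partial\bfx)$. Differentiating $\bar M(\bfx, \hat\bfy^{(j)}_0) = (1 - \Phi(d(\bfq,\calF)))\, R(\theta)\Gamma^{-1}R^\top(\theta)$ with respect to $\bfx$ produces two contributions per entry: (a) the visibility-factor derivative $-\Phi'(d)\,(\partial d/\partial\bfq)(\partial\bfq/\partial\bfx)$ multiplying the full sensor term $\bfe_i^\top R(\theta)\Gamma^{-1}R^\top(\theta)\bfe_l$, and (b) the rotation derivative $\bfe_i^\top D(R'(\theta)\Gamma^{-1}R^\top(\theta))\bfe_l$, which affects only the heading coordinate and hence is multiplied by $\bfe_3^\top$. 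The geometric Jacobian $\partial\bfq/\partial\bfx$ follows from $\bfq = R^\top(\theta)(\bfy^{(j)} - Q\bfx)$, with $Q = [I_2,\ \mathbf{0}]$ extracting the position, by combining the translation term $-R^\top(\theta)Q$ with the heading term $R'^\top(\theta)(\bfy^{(j)} - Q\bfx)\bfe_3^\top$.

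The main obstacle is not any single derivative but the bookkeeping across the three distinct entries of the symmetric $2\times 2$ update and the two qualitatively different $\bfx$-dependencies inside $\bar M$ (visibility and orientation). The key organizational choice is to isolate the common scalar denominator $f$ and to encode the per-entry gradients via the compact vectors $\bfr_i$ and $\widetilde\bfr_i$; once that notation is in place, each individual identity reduces to the quotient rule and a single chain-rule expansion, and the proposition follows by assembling the per-landmark pieces into the block structure identified in the first paragraph.
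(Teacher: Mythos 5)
Your proposal is correct and follows essentially the same route as the paper's proof: reduce to a single $2\times 2$ block, express the updated covariance entries as ratios $g_i/f$ with the common denominator $f=\det(I+\Sigma M)$, and then apply the quotient and chain rules (the paper simply states the explicit $g_i$ and $f$ and asserts that differentiation yields the result, whereas you additionally make explicit the identity $(\Sigma^{-1}+M)^{-1}=(I+\Sigma M)^{-1}\Sigma$ and the chain rule through $\bfm^{(j)}(\bfx)$). No gaps.
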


\begin{proof}
Riccati update \eqref{eq:Riccati-diag} for the covariance matrix can be rewritten with respect to the vector states $\bfsigma^{(j)}_k, \bfm^{(j)}$ as $$
    \bfsigma^{(j)}_{k+1} = \bar \bfg(\bfsigma^{(j)}_{k}, \bfm^{(j)}(\bfx_{k+1})), $$
where $\bar   \bfg: \R^3 \times \R^3 \to \R^3$ as a function of $\bfsigma = [\sigma_1, \sigma_2, \sigma_3] \in \R^3$ and $\bfm = [m_1, m_2, m_3] \in \R^3$ is given by 
\begin{align*} 
  \bar   \bfg(\bfsigma, \bfm ) &= \left[ 
    \begin{array}{ccc}
    g_1(\bfsigma, \bfm) &
    g_2(\bfsigma, \bfm) &
    g_3(\bfsigma, \bfm) 
    \end{array} 
    \right] , \\
g_1(\bfsigma, \bfm) &= f(\bfsigma, \bfm)^{-1} (- \sigma_2^2 m_3 + \sigma_1 \sigma_3 m_3 + \sigma_1), \\
g_2(\bfsigma, \bfm) &= f(\bfsigma, \bfm)^{-1} (\sigma_2^2 m_2 - \sigma_1 \sigma_3 m_2 + \sigma_2), \\
g_3(\bfsigma, \bfm) &=  f(\bfsigma, \bfm)^{-1} (- \sigma_2^2 m_1 + \sigma_1 \sigma_3 m_1 + \sigma_3), \\
  f(\bfsigma, \bfm) &=    \sigma_2^2 (m_2^2 - m_1 m_3) + 2 \sigma_2 m_2 + \sigma_3 m_3 \notag\\
  & + \sigma_1 (m_1 (\sigma_3 m_3 + 1) - \sigma_3 m_2^2) + 1. 
\end{align*}
Thus, taking the derivatives of the equations above with respect to $\bfsigma$ and $\bfx$ leads to Proposition \ref{prop:jacob-riccati}. 
\end{proof}

\end{document}